\newcommand{\cmark}{\ding{51}}%
\newcommand{\xmark}{\ding{55}}%
\newtheorem{theorem}{Theorem}
\def\needanon{0}
\newcommand{\anon}[2]{\if \needanon1 #1\else #2\fi}
\title{Continuous-time identification of dynamic state-space models by deep subspace encoding}
\author{Gerben I. Beintema, Maarten Schoukens \& Roland T\'{o}th\thanks{Also associated with, Systems and Control Laboratory, Institute for Computer Science and Control, Budapest, Hungary.}\\
Department of Electrical Engineering, Eindhoven University of Technology, The Netherlands \\
\texttt{\{g.i.beintema,m.schoukens,r.toth\}@tue.nl} %
}
\begin{document}

\maketitle

\begin{abstract}

Continuous-time (CT) modeling has proven to provide improved sample efficiency and interpretability in learning the dynamical behavior of physical systems compared to discrete-time (DT) models. However, even with numerous recent developments, the CT nonlinear state-space (NL-SS) model identification problem remains to be solved in full, considering common experimental aspects such as the presence of external inputs, measurement noise, latent states, and general robustness. This paper presents a novel estimation method that addresses all these aspects and that can obtain state-of-the-art results on multiple benchmarks with compact fully connected neural networks capturing the CT dynamics. The proposed estimation method called the subspace encoder approach (SUBNET) ascertains these results by efficiently approximating the complete simulation loss by evaluating short simulations on subsections of the data, by using an encoder function to estimate the initial state for each subsection and a novel state-derivative normalization to ensure stability and good numerical conditioning of the training process. We prove that the use of subsections increases cost function smoothness together with the necessary requirements for the existence of the encoder function and we show that the proposed state-derivative normalization is essential for reliable estimation of CT NL-SS models.

\end{abstract}

               \section{Introduction}
\vspace{-2mm}

Dynamical systems described by nonlinear state-space models with a state vector $x(t) \in \mathds{R}^{n_x}$ are powerful tools of many modern sciences and engineering disciplines to understand potentially complex dynamical systems. One can distinguish between Discrete-Time (DT) $x_{k+1} = f(x_k, u_k)$ and Continuous-Time (CT) $\frac{d x(t)}{dt} = f(x(t), u(t))$ state-space models. In general, obtaining DT dynamical models from data is easier than CT models since data in computers is represented as discrete elements (e.g. arrays). However, the additional implementation complexity and computational costs associated with identifying CT models can be justified in many cases. First and foremost, from the natural sciences, we know that many systems are compactly described by CT dynamics which makes the continuity prior of CT models a well-motivated regularization/prior~\citep{de2019con-time}. It has been observed that this regularization can be beneficial for sample efficiency~\citep{de2019con-time} which is a common observation when ``including physics'' in learning approaches~\citep{karniadakis2021physics}. Furthermore, the analysis of ODE equations is a well-regarded field of study with many powerful results and methods which could further improve model interpretability~\citep{fan2021interpretability}, such as applied in~\cite{bai2019equilibrium}. Another inherent advantage is that these models can be used with irregularly sampled or missing data~\citep{rudy2019noise-ode-time-step}. Lastly, in the control community, CT models are generally regarded desirable for control synthesis tasks as shaping the behavior of the controller is much more intuitive in CT~\citep{garcia1989MPC}. Hence, developing robust and general CT models and estimation methods would be greatly beneficial. 

In the identification of physical CT systems, it is common to encounter challenges such as: external inputs ($u(t)$), noisy measurements, latent states, unknown measurement function/distribution (e.g. $y(t) = h(x(t))$), the need for accurate long-term predictions and a need for a sufficiently low computational cost. For instance, all these aspects need to be considered for the cascade tank benchmark problem~\citep{schoukens2016CCT}. These aspects and the considered CT state-space model is summarized in Figure \ref{fig:overview figure}. Many of these aspects have been studied independently, for instance,~\cite{brajard2020Lorenznoisy,rudy2019noise-ode-time-step} explicitly addressed the presence of noise on the measurement data,~\cite{maulik2020latent,chen2018neuralode} provided methods for modeling dynamics with latent states,~\cite{zhong2019HNNcontrol} considers the presence of known external inputs,~\cite{zhou2021informer-long-sequences} provides a computationally tractable method for accurate long-term sequence modeling. However, formulating models and estimation methods for the combination of multiple or all aspects is in comparison underdeveloped with only a few attempts such as \cite{forgione2021continuous} that have been made. 

In contrast to previous work, we present a CT encoder-based method which is a general, robust and well-performing estimation method for CT state-space model identification. That is, the formulation addresses noise assumptions, external inputs, latent states, an unknown output function, and provides state-of-the-art results on multiple benchmarks of real systems. The presented subspace encoder method is summarized in Figure \ref{fig:encoder-figure}. The proposed method considers a cost function evaluations on only short subsections of the available dataset which reduces the computational complexity. Furthermore, we show theoretically that considering subsections enhances cost function smoothness and thus optimization stability. 
The initial states of these subsections are estimated using the encoder function for which we present necessary requirements for its existence. Lastly, we introduce a normalization of the state and state-derivative and we show that it is required for proper CT estimation. Moreover, we attain additional novelty as these results are obtained without needing to impose a specific structure on the state-space (such as in ~\cite{greydanus2019HNN-original,cranmer2020LNN-original}) obtaining a practically widely applicable method.

\begin{figure}
    \centering
    \includegraphics[width=0.90\linewidth]{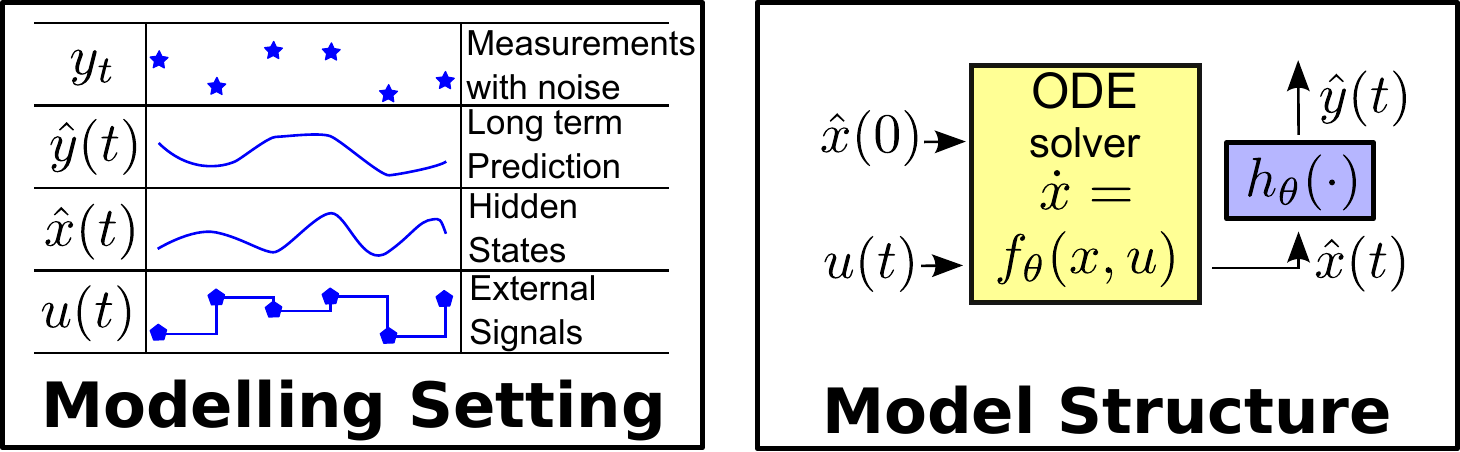}
    \caption{In this work, we consider the problem of estimating continuous-time (CT) state-space models from noisy observation (additive noise) with long-term prediction capabilities, hidden states and external signals in a computationally efficient and robust manner.} 
    \label{fig:overview figure}
\end{figure}

Our main contributions are the following; 
\begin{itemize}
\item We formally derive the problem of CT state-space model estimation with latent states, external inputs, and measurement noise. %
\item We reduce the computational loads by proposing a subspace encoder-based identification algorithm that employs short subsections, an encoder function that estimates the initial latent states of these subsections, and a state-derivative normalization term for robustness.
\item We make multiple theoretical contributions; \textit{(i)} we prove that the use of short subsections increases cost function smoothness by Lipschitz continuity analysis, \textit{(ii)} we derive necessary conditions for the encoder function to exist and \textit{(iii)} we show that a state-derivative normalization term is required for proper CT model estimation.
\item We demonstrate that the proposed estimation method obtains state-of-the-art results on multiple benchmarks.
\end{itemize} 

\section{Related work}

One of the most influential papers in CT model estimation is the introduction of neural ODEs~\citep{chen2018neuralode}, which showed that residual networks is presented as an Euler discretization of a continuous in-depth neural network. Moreover, they also show that one can employ numerical integrators to integrate through the depth in a computationally efficient manner. This depth can be interpreted as the time direction to be able to model dynamical systems. The ideas in the neural ODE contribution have been extended to/used in, for instance, normalizing flows to efficiently model arbitrary probability distributions~\citep{papamakarios2021norflows,grathwohl2018norflows}, and enhance the understanding and interpretability of neural networks~\citep{fan2021interpretability}. However, the neural ODE does not scale well for long sequences, nor does it consider external inputs or noise, and the optimization process is often unstable. 

An adjacent research direction is the method/models which consider CT dynamics and directly use the state derivatives and even often the noiseless states to formulate structured and interpretable models such as Hamiltonian Neural Networks (HNN)~\citep{greydanus2019HNN-original}, Lagrangian Neural Networks (LNN)~\citep{cranmer2020LNN-original} and Sparse Identification of Nonlinear Dynamics (SINDy)~\citep{brunton2016sindy}. In contrast, the proposed method is formulated for an unstructured state-space and does not require the system state or the state derivatives to be known. 

Our method is in part related to~\citep{ayed2019latent-ODE} which concerns the estimation of CT models with latent variables. They also employ an encoder function to estimate initial states, however, this encoder is only dependent on the past outputs, contains a partially known state and there is no theoretical support for the method. Furthermore, in that work, only a fixed output function is considered and the involved optimization problem is solved as an optimal control problem whereas our formulation alters the simulation loss function to obtain a computationally desirable form. Furthermore,~\citep{forgione2021continuous}, to which we compare in this work, considers CT model with latent variables, subsections, and an additional loss term for the integration error. However, they include the initial states of these subsections as free optimization parameters. This increases the model complexity with the number of subsections. In contrast, our proposed method uses an encoder to estimate the initial states. This results in fixed model complexity. Furthermore, we only employ a single loss function and a novel state-derivative normalization term. Additionally, we provide theoretical insights into these existing elements and extend them to the considered setting in a robust and computationally efficient manner. 

              \section{Problem statement}
Consider a system represented by a continuous-time nonlinear state-space (CT NL-SS) description sampled at a fixed interval $\Delta t$ for simplicity:
\begin{equation}
\begin{aligned}
\dot{x}_s(t) &= f(x_s(t), u(t)), \\
y_k &= h(x_{s,k}, u_k) + w_k,
\end{aligned}
\end{equation}
where the subscript notation denotes sampling as $x_{s,k} = x_s(k \Delta t)$, $x_s(t) \in \mathds{R}^{n_{x_s}}$ is the system state variable, $u(t) \in \mathds{R}^{n_u}$ is the input, $y_k \in \mathds{R}^{n_y}$ is the output, $f$ represents the system dynamics and $h$ gives the output function while $w_k \in \mathds{R}^{n_y}$ is a i.i.d. zero-mean white noise process with finite variance $\Sigma_w$.

For this system, the CT model estimation problem can be expressed for a given dataset of measurements:
$$D_N = \{(u_0, y_0), (u_1, y_1), ... ,(u_{N-1}, y_{N-1})\},$$
with unknown $w_k$, $x_s(t)$, $\dot{x}_s(t)$, $\dot{y}_k$ and initial state $x_s(0)$, as the following optimization problem (a.k.a. simulation loss minimization):
\begin{equation}
\label{eq:full-sim-optim}
\begin{aligned}
    \min_{\theta,x(0)} \quad & \frac{1}{N} \sum_{k=0}^{N-1} \| y(k \Delta t) - \hat{y}(k \Delta t) \|_2^2, \\
    \textrm{s.t.} \quad & \hat{y}(t) = h_\theta(x(t)),\\
      & \dot{x}(t) = f_\theta(x(t),u(t)),   \\
\end{aligned}
\end{equation}
where $x(t) \in \mathds{R}^{n_x}$ is the model state, $h_\theta$ and $f_\theta$ are the output and state-derivative functions parameterized by $\theta$ and being Lipschitz continuous in their inputs and parameterization. These two functions are formulated as multi-layer feedforward neural networks during our experiments. 

To obtain the simulation output $\hat{y}(k \Delta t)$, one can integrate $\dot{x}(t) = f_\theta(x(t),u(t))$ starting from the initial state $x(0)$. This integration can be performed with any ODE solver that allows for back-propagation such as Euler ($x(t+\Delta t) = x(t) + \Delta t f_\theta (x(t),u(t))$), RK4, or numerous adaptive step methods~\citep{chen2018neuralode,ribeiro2020smoothness}.\footnote{The adjoint methods for gradient computation is not within the scope of this research.} To make this a well-posed optimization problem, additional information or an assumption on the inter-sample behavior of $u(t)$ is required, since, for example, $u(\Delta t/2)$ is not present in $D_N$. This behavior is often chosen to be Zero-Order Hold (ZOH)~\citep{ljung1999system} as can be viewed in Figure \ref{fig:overview figure}. %

Multiple major issues are encountered when solving the optimization Problem \eqref{eq:full-sim-optim} with a gradient-descent-based method. The first issue is that computing the value of the loss function requires a forward pass on the whole length of the dataset~\citep{ayed2019latent-ODE}. Hence, the computational complexity grows linearly with the length of the dataset. Furthermore, a common occurrence is that the values of $x(t)$ or its gradient grows exponentially which results in non-smooth loss functions or gradients. This causes gradient-based optimization algorithms to become unreliable since the optimization process might be unstable or it converges to a local minima~\citep{ribeiro2020smoothness}. All these issues are addressed in the proposed method. 

               \section{Proposed method}

We propose to consider multiple overlapping short subsections of length $T \Delta t$ to form a truncated simulation loss instead of simulating over the entire length of the dataset. We express this in the following optimization problem (note that we express the optimization problem with discrete-time notation ($u_k := u(k \Delta t)$) for brevity):
\begin{equation}
\label{eq:encoder-method}
\begin{aligned}
    \underset{\theta}{\text{minimize}} \quad & \frac{1}{N-T-\max(n_a,n_b)+1} \sum_{n=\max(n_a,n_b)}^{N-T} \frac{1}{T} \sum_{k=0}^{T-1} \| y_{n+k} - \hat{y}_{n+k|n} \|_2^2, \\
    \textrm{s.t.} \quad & \hat{y}_{n+k|n} = h_\theta(x_{n+k|n}),\\
      &x_{n+k+1|n} = \text{ODEsolve} [ \frac{1}{\tau} f_\theta, x_{n+k|n}, u_{n+k}, \Delta t ] ,   \\
      &x_{n|n} = \psi_\theta(u_{n-1},...,u_{n-n_b}, y_{n-1}, ..., y_{n-n_a}).
\end{aligned}
\end{equation}%

Here, the pipe ($|$) notation indicates the current index and the starting index as (current index $|$ start index) to differentiate between different subsections. This pipe notation is similar to the notation used in Kalman filtering and conditional probability distributions~\citep{chui2017kalman}. Furthermore, ODEsolve indicates a numerical scheme which integrates $1/\tau\ f_\theta (x,u)$ from the initial state $x_{n+k|n}$ for a length of $\Delta t$ given the input $u_{n+k}$. Lastly, we introduced an encoder function $\psi_\theta$ with encoder lengths $n_a$ and $n_b$, for the past output and input samples respectively, which estimates the initial states of the considered subsection. This encoder function will also be parameterized as a feedforward neural network during our experiments. A graphical summary of the proposed method called the CT subspace encoder approach (abbreviated as SUBNET) can be viewed in Figure \ref{fig:encoder-figure}. 

\begin{figure}
    \centering
    \includegraphics[width=0.80\linewidth]{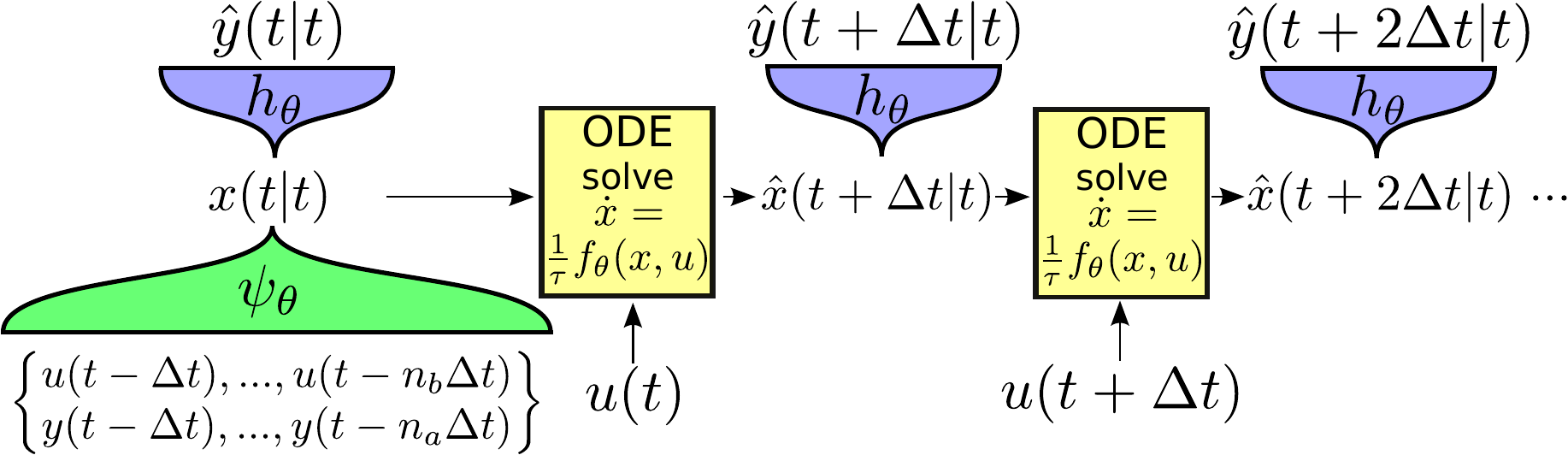}
    \caption{The CT subspace encoder (SUBNET) method applied on a subsection of the data of length $T\Delta t$ starting from time $t$. The encoder $\psi_\theta$ estimates the initial state, $h_\theta$ provides the output predictions while $\frac{1}{\tau} f_\theta$ governs the state dynamics. All three functions are parameterized by fully connected neural networks. Lastly, the $\frac{1}{\tau}$ factor is the novel state-derivative normalization factor which significantly increases optimization stability. All three functions are optimized together by minimizing the mean squared difference of the model outputs of multiple subsections of the available training data as seen in Eq. \eqref{eq:encoder-method} which both reduces the computational cost and enhances cost function smoothness as seen in Theorem \ref{theorem:Lipschitz}.}
    \label{fig:encoder-figure}
\end{figure}

The first observation is that optimization Problem \eqref{eq:encoder-method} is a generalisation of \eqref{eq:full-sim-optim} since if $T=N$ and $n_a=n_b=0$, the original optimization Problem \eqref{eq:full-sim-optim} is recovered. However, as one might observe, this optimization problem is less computationally challenging to solve if $T<N$ since the first sum can be computed in parallel. In other words, computational costs scale as $\mathcal{O}(T)$ for \eqref{eq:encoder-method} and $\mathcal{O}(N)$ for \eqref{eq:full-sim-optim}. Moreover, the smoothness of the encoder cost function is also enhanced since the associated Lipschitz constant $L_{V^{\text{(enc)}}}$ can scale exponentially with the subsection length ($T \Delta t$) as shown in Theorem~\ref{theorem:Lipschitz}. The enhanced smoothness is reflected in the ease of optimization~\citep{ribeiro2020smoothness}. 

\begin{theorem}\label{theorem:Lipschitz}
The Lipschitz constant $L_{V^{\text{(enc)}}}$ of the cost function \eqref{eq:encoder-method}
\begin{gather}
    \| V^{\text{(enc)}}(\theta_1) - V^{\text{(enc)}}(\theta_2) \|_2 \leq L_{V^{\text{(enc)}}} \| \theta_1 -\theta_2\|_2
\end{gather}
scales as
\begin{gather}
    L_{V^{\text{(enc)}}} = \mathcal{O}(\exp(2 T \Delta t L_f/\tau))
\end{gather}
where $L_f$ is the Lipschitz constant of $f_\theta$. 
\end{theorem}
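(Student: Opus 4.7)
The plan is to peel back the composite structure of the predicted output $\hat{y}_{n+k|n}(\theta) = h_\theta\circ\Phi_\theta^{k\Delta t}\circ\psi_\theta(\cdot)$, where $\Phi_\theta^{t}$ denotes the (numerically approximated) flow map of $\dot{x} = \tau^{-1}f_\theta(x,u)$, and to track how Lipschitz constants compound through the encoder, the flow, the output map, and finally the quadratic loss. I would begin from the elementary identity $\bigl|\|a\|_2^2 - \|b\|_2^2\bigr| \leq (\|a\|_2 + \|b\|_2)\,\|a-b\|_2$, applied term-by-term with $a = y_{n+k} - \hat{y}_{n+k|n}(\theta_1)$ and $b = y_{n+k} - \hat{y}_{n+k|n}(\theta_2)$. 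This splits the bound into two pieces: the prediction sensitivity $\|\hat{y}(\theta_1) - \hat{y}(\theta_2)\|_2$, and the residual magnitude $\|y - \hat{y}(\theta)\|_2$ uniformly over $\theta$ in a bounded neighbourhood.

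The core of the argument is a Gr\"onwall estimate on the trajectory error $e(t) := \|x(t;\theta_1) - x(t;\theta_2)\|_2$. Using Lipschitz continuity of $f_\theta$ in both the state (constant $L_f$) and the parameters, one obtains
\[
e(t) \leq e(0) + \frac{L_f}{\tau}\int_0^t e(s)\,ds + \frac{L_{f,\theta}}{\tau}\,t\,\|\theta_1-\theta_2\|_2,
\]
and Gr\"onwall's inequality yields $e(t) \leq \bigl(e(0) + C\,t\,\|\theta_1-\theta_2\|_2\bigr)\exp(L_f t/\tau)$. Initializing with $e(0) \leq L_\psi\|\theta_1-\theta_2\|_2$ from the Lipschitz encoder and evaluating at $t \leq T\Delta t$ gives a state Lipschitz constant of order $\exp(L_f T\Delta t/\tau)$. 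Passing through $h_\theta$ (Lipschitz in both its argument and $\theta$) gives the same scaling for the prediction sensitivity. The same bound survives numerical integration because the discrete recursion $(1 + L_f\Delta t/\tau)^T \leq \exp(L_f T\Delta t/\tau)$ plays the role of continuous Gr\"onwall for any consistent ODEsolve.

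A parallel Gr\"onwall-type bound on $\|x(t;\theta)\|_2$ itself, using boundedness of $f_\theta(0,u)$ over the bounded input range, shows that $\|\hat{y}(\theta)\|_2$ also grows at most like $\exp(L_f T\Delta t/\tau)$, so the residual-magnitude factor $\|y - \hat{y}(\theta)\|_2$ contributes a \emph{second} exponential. Multiplying the two factors produces the claimed $\mathcal{O}(\exp(2L_f T\Delta t/\tau))$ scaling, and averaging over the finite number of subsections and the $T$ time indices only alters constants. I expect the main obstacle to be rigorously justifying the residual-magnitude bound: it relies on a uniform boundedness assumption on $\theta$ (or at least a local one on a compact neighbourhood), together with bounded data and inputs, without which $\|y - \hat{y}(\theta)\|_2$ is not controlled. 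This is also precisely where the coefficient $2$ in the exponent comes from, namely as the product of prediction-sensitivity growth and prediction-magnitude growth, so some care is needed to avoid double-counting the encoder's contribution to the initial state on both sides of the estimate.
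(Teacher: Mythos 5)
Your proposal is correct and follows essentially the same route as the paper's proof: the same splitting of the squared-loss difference into a residual-magnitude factor times a prediction-sensitivity factor, the same exponential bound on the state's parameter sensitivity propagated from the encoder initial condition through the flow and the output map, and the same identification of the coefficient $2$ in the exponent as the product of the two exponentially growing factors (prediction magnitude $M_k$ and prediction sensitivity). The only cosmetic difference is that you invoke the standard integral form of Gr\"onwall's inequality where the paper derives and solves an ODE $\dot{L}_x = \sqrt{1+L_x^2}\,L_f/\tau$ for the state-sensitivity Lipschitz constant, yielding a $\sinh$ with the same asymptotics.
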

\begin{proof}
See Appendix~\ref{app:Lipschitz}
\end{proof}
An error in the initial state of $x_{n|n}$ can significantly bias the estimate due to the short nature of the subsections. To counter this, we formulated an encoder function $\psi_\theta$ which estimates the initial state of each subsection. We do not add any additional loss term since an improved initial state error estimate also minimizes the transient error~\citep{forgione2021continuous} which is present in the encoder loss.

A natural question to ask is under which conditions there exists an encoder function that can map from the past inputs and outputs to this initial state. In Appendix \ref{app:reconstruct}, we formally derive necessary conditions for the existence. These necessary conditions are that, state derivative $f_\theta$ requires to be Lipschitz continuous in $x$, and if the number of considered past outputs $n_a$ and inputs $n_b$ are equal then $n_a \geq n_x/n_y$ needs to be satisfied, among other conditions. 

It is widely known that input and output normalization is essential for obtaining competitive models throughout deep learning in terms of respecting the prior assumptions made in for instance Xavier Weight Initialization \citep{Xavier2010understanding}. Input and output normalization can be seen to be insufficient when considering CT state-space model due to the presence of the hidden state $x$ and the state-derivative $f_\theta(x,u)$. However, as shown in Theorem~\ref{theorem:tau-derive}, any CT system can be transformed to become normalized by the introduction of a state transform and a positive $1/\tau$ normalization factor. 

\begin{theorem}\label{theorem:tau-derive} Given $\dot{x}(t)=f(x(t),u(t))$ and $y(t)=h(x(t),u(t))$ that defines the dynamics of a system. For any bounded non-zero state-trajectory $x(t)\in\mathds{R}^{n_x}$ and input signal $u(t)\in \mathds{R}^{n_u}$ that satisfies $\dot{x}(t) = f(x(t),u(t))$ for all $t \in \mathds{R}$,
 there exists a $\tau$ and a scalar state transformation $\gamma \tilde{x}(t) = x(t)$ such that both the equivalent state trajectory $\tilde{x}$ and state-derivative function $\tilde{f}(\tilde{x},u)$ of the transformed system $\dot{\tilde{x}} = \frac{1}{\tau} \tilde{f}(\tilde{x}(t),u(t))$ are normalized on the time interval $[0,L]$ as 
\begin{equation}
\text{RMS}(\tilde{x}) = \sqrt{\frac{1}{L} \int_0^L \frac{1}{n_x} \| \tilde{x}(t) \|_2^2\ dt} = 1 \hspace{1cm} \& \hspace{1cm}  \text{RMS}(\tilde{f}(\tilde{x},u)) = 1.  
\end{equation}
if $\text{RMS}(f(x,u))\neq 0$.
\end{theorem}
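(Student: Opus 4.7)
The plan is to construct $\gamma$ and $\tau$ explicitly from the RMS values of the given trajectory $x(t)$ and of the derivative signal $f(x(t),u(t))$ on $[0,L]$, and then verify both normalization identities by direct substitution. The two normalization conditions contribute two scalar equations and there are exactly two scalar unknowns ($\gamma$ and $\tau$), so I expect a unique closed-form solution (up to sign) with no genuine existence difficulty.

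First I would write the transformed dynamics. Substituting $x(t)=\gamma \tilde{x}(t)$ and differentiating gives $\dot{\tilde{x}}(t) = f(\gamma \tilde{x}(t),u(t))/\gamma$. Matching this against the desired form $\dot{\tilde{x}}(t)=(1/\tau)\,\tilde f(\tilde x(t),u(t))$ forces the definition
\begin{equation*}
\tilde f(\tilde x,u) \;:=\; (\tau/\gamma)\, f(\gamma \tilde x,u),
\end{equation*}
so the transformed drift is completely determined once $\gamma$ and $\tau$ are chosen.

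Next I would impose the two RMS conditions in sequence. The first gives $\text{RMS}(\tilde x)^2 = \text{RMS}(x)^2/\gamma^2$, which fixes $\gamma = \text{RMS}(x)$; this is well-defined and strictly positive because $x(t)$ is assumed bounded and non-zero, so $\text{RMS}(x)>0$. Evaluated along the trajectory $\tilde x(t)=x(t)/\gamma$, the second condition becomes $\text{RMS}(\tilde f(\tilde x,u))^2 = (\tau/\gamma)^2\,\text{RMS}(f(x,u))^2$, which fixes $\tau = \gamma/\text{RMS}(f(x,u)) = \text{RMS}(x)/\text{RMS}(f(x,u))$. The explicit hypothesis $\text{RMS}(f(x,u))\neq 0$ is exactly what is needed to make this division legal, and the resulting $\tau$ is positive as required.

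There is no really hard step here: the argument is essentially algebraic once the scalar transformation is posited. The one point worth stating carefully in the write-up is that $\text{RMS}(\tilde f(\tilde x,u))$ must be interpreted as the RMS along the transformed trajectory $\tilde x(t)$ (not over some ambient distribution on state space), so that the change of variables $x=\gamma \tilde x$ cleanly pulls the factor $(\tau/\gamma)^2$ outside the time integral. With that convention fixed, the two displayed identities close the proof.
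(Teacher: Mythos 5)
Your construction is correct and is essentially the paper's own proof: both set $\gamma=\text{RMS}(x)$ and $\tau=\text{RMS}(x)/\text{RMS}(\dot x)$ (identical to your $\text{RMS}(x)/\text{RMS}(f(x,u))$ since $\dot x=f(x,u)$ along the trajectory) and verify the two identities by direct substitution. Your explicit derivation of $\tilde f(\tilde x,u)=(\tau/\gamma)f(\gamma\tilde x,u)$ and the remark that the RMS of $\tilde f$ is taken along the transformed trajectory merely make precise what the paper leaves implicit.
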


\begin{proof}
With 
\begin{equation}
    \gamma = \text{RMS}(x)  \hspace{1cm} \& \hspace{1cm}  \frac{1}{\tau} = \frac{\text{RMS}(\dot{x})}{\text{RMS}(x)}
\end{equation}
the normalization conditions are satisfied, as shown below
\begin{subequations}
\begin{gather}
    \text{RMS}(\tilde{x}) = \text{RMS}(x)/\gamma = 1\\
    \text{RMS}(\tilde{f}(\tilde{x}, u) ) = \tau \text{RMS}(\dot{\tilde{x}})  = \tau \text{RMS}(\dot{x})/ \gamma = \tau \text{RMS}(\dot{x})/ \text{RMS}(x) = 1 
\end{gather}
\end{subequations}
\end{proof}

Hence, to assure the existence of a properly normalized model where both the state-derivative $\tilde{f}$ function and state $\tilde{x}$ are normalized, it is sufficient to include a state and state-derivative normalization factor. Furthermore, this proof also guides the choice of $\tau$ since the amplitude of $\text{RMS}(x)/\text{RMS}(\dot{x})$ might be known from physical insight or by an approximate model. 

                 \section{Experiments} \vspace{-3mm}
\subsection{Benchmark descriptions} \vspace{-2mm}

\begin{figure}[t!]
\centering
\begin{subfigure}{0.25\textwidth}
  \centering
  \includegraphics[width=1.0\linewidth]{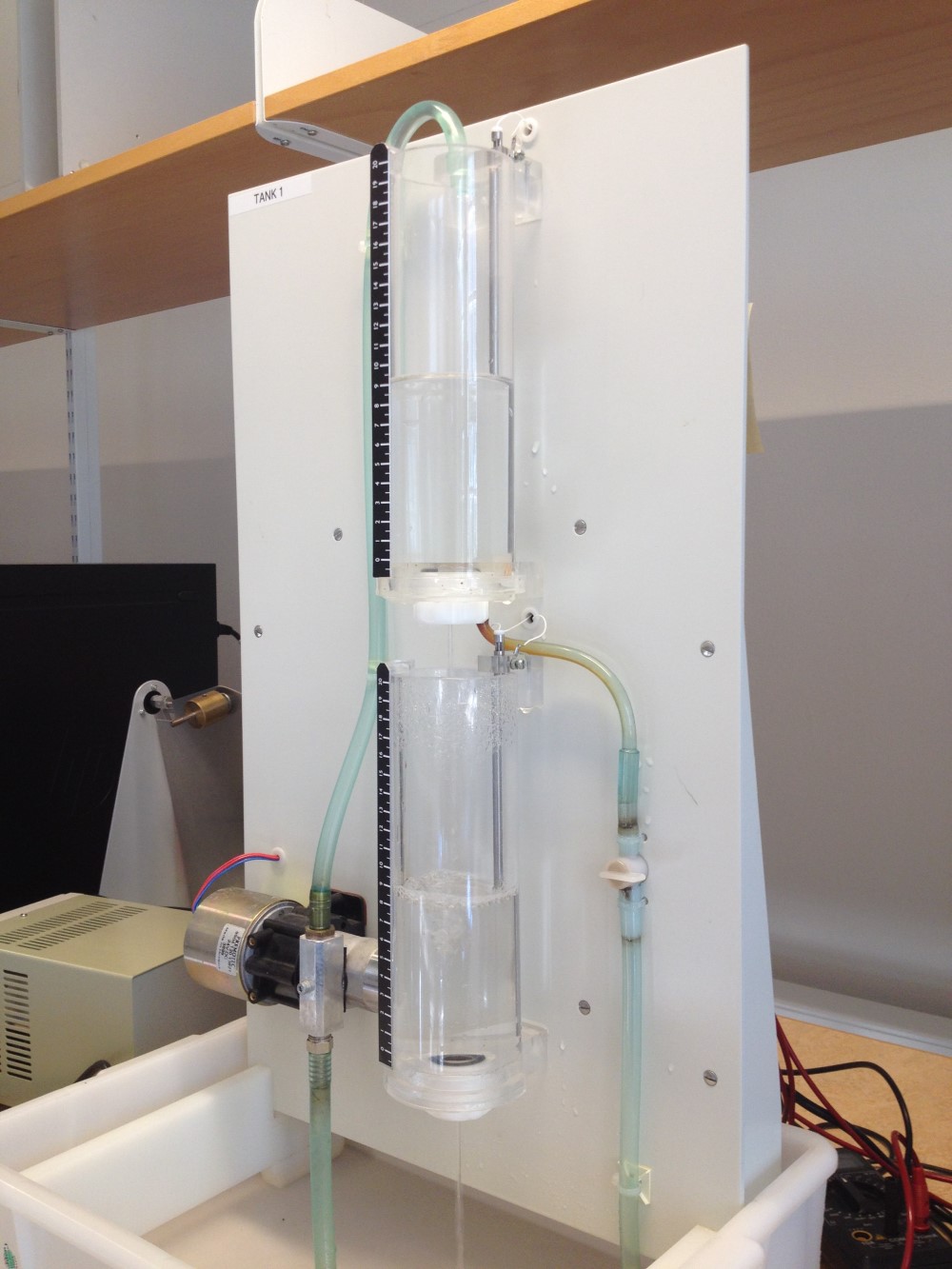}
\end{subfigure}%
\begin{subfigure}{0.5\textwidth}
  \centering
  \includegraphics[width=1.0\linewidth]{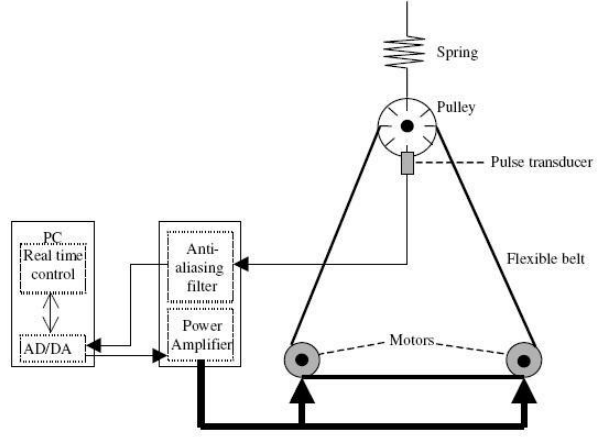}
\end{subfigure}%
\caption{A photo of the \textbf{Cascade Tank with overflow (CCT)} system~\citep{schoukens2016CCT} and a graphical depiction of the \textbf{Coupled Electric Drive (CED)} system~\citep{wigren2017CED}. These two systems and the EMPS benchmark are the basis of the benchmarks used in the analysis and comparison of the SUBNET method. }
\label{fig:benchmark-graphics}
\end{figure}

The \textbf{Cascade Tank with overflow (CCT)} benchmark~\citep{schoukens2016CCT,schoukens2016cascadedbase} consists of measurements taken from a two-tank fluid system with a pump. The input signal controls a water pump that delivers water from the reservoir to the upper tank. Through a small opening in the upper tank, the water enters the lower tank where the water level is recorded. Lastly, through a small opening in the lower tank, the water re-enters the reservoir. This benchmark is nonlinear as the flow rates are governed by square root relations and the water can overflow either tank which is a hard saturation nonlinearity. The benchmark consists of two datasets with measurements of 1024 samples each at a sample rate of $\Delta t = 4 \mathrm{s}$. The first dataset is used for training, the first 512 samples of the second set are used for validation (used only for early stopping) and the entire second set for testing. Most of the other methods to which we compare our approach use the entire second set as validation and test set, as no explicit test set is provided in this benchmark description. \vspace{-1mm}

The \textbf{Coupled Electric Drive (CED)} benchmark~\citep{wigren2017CED} consists of measurements from a belt and pulley with two motors where both clockwise and counter-clockwise movement is permitted. The motors are actuated by the given inputs and the measured output is a pulse transducer that only measures the absolute velocity (i.e. insensitive to the sign of the velocity) of the belt. The system approximately has three states; the velocity of the belt, the position of the pulley, and the velocity of the pulley. The benchmark consists of two datasets of measured 500 samples each at a sample rate of $\Delta t = 20 \mathrm{ms}$. The first 300 samples are used for training and the other 200 samples are for testing, of those samples the first 100 samples are also used for validation with both datasets. Similar to the last benchmark, even with this overlap, it is still a fair comparison as most of the other methods to which we compare use the entire second set as validation and test. \vspace{-1mm}

The \textbf{Electro-Mechanical Positioning System (EMPS)} benchmark~\citep{janot2019EMPS} consists of measured signals from a one-dimensional drive system used to drive the prismatic joint of robots or machine tools. The provided measurements of the position are obtained in closed-loop actuated and no direct velocity measurements are available. The main source of nonlinearity are the nonlinear friction effects (e.g. static and dynamic friction). The benchmark consists of two sequences of samples 24841 with a sampling time of $\Delta t = 1 \mathrm{ms}$. As prescribed by the benchmark, the first sequence is used for training and validation and the second sequence for testing (i.e. the validation set and test set are completely disjoint). Specifically for the CT subnet implementation, we utilize 17885 samples of the first set for training and the last 6956 samples are used for validation while the entire second set is used for testing. 

\vspace{-3mm}

\subsection{Results} 

Using the SUBNET method, we estimate models where the three functions $h_\theta$, $f_\theta$ and $\psi_\theta$ are implemented as 2 hidden layer neural networks with 64 hidden nodes per layer, tanh activation and a linear bypass from the input to the output for CCT and CED and 1 hidden layer with 30 hidden nodes for EMPS. As an ODE solver, we use a single RK4 step between samples and assume that the input signal has been applied in a zero-order hold sense. As for the implementation of the CT subspace encoder-based method, the following hyperparameters are considered; $n_x=2$, $n_a=n_b=5$ and $T=30$ for CCT, $n_x=3$, $n_a=n_b=4$ and $T=60$ for CED and $n_x = 3$, $n_a=n_b=20$ and $T=200$ for EMPS. These hyperparameters are chosen based on hyperparameters analysis shown in~\cite{beintema2021nonlinear} for discrete-time. We observed similar effects of the hyperparameters for continuous-time. The training is done by using the Adam optimizer with default settings~\citep{kingma2014adam} with a batch size of 32 for CED, 64 for CCT and 1024 for EMPS and using a simulation on the validation dataset for early stopping to reduce overfitting. 

We also directly compare our method with a reproduction of neural ODE on both benchmarks. We adapt the code and the example (``\texttt{latent\_ODE.py}'') available online~\citep{chen2018neuralode} to include ZOH inputs, leaving the neural network unaltered and an RK4 integrator. We observed that the initial model was unstable for CCT and underperforming for CED and, hence, the neural ODE method alone was unable to provide state-of-the-art results. To stabilize and improve the neural ODE method we also introduce a state-derivative normalization term $1/\tau$ motivated by Theorem \ref{theorem:tau-derive}. The value of $1/\tau$ for CCT and CED for neural ODE was initially chosen to be the optimal value found in the SUBNET approach, however, in the CED case, it was lowered due to optimization instabilities. \footnote{The code used for both SUBNET and neural ODE experiments is available at \anon{\textit{``Github link removed for double-blind reviews, see zip file from supplementary materials until de-anonymization''}}{\url{https://github.com/GerbenBeintema/CT-subnet}}} %

\newcommand{\tworow}[2]{\begin{tabular}[c]{@{}l@{}} #1 \\ #2\end{tabular}}

\begin{table}
\centering
\caption{The test RMSE simulation on two benchmarks for the CT SUBNET method using an ensemble of models. The value given is the best RMSE simulation of all estimated models and the value between parentheses is the mean performance of the models. Note that we are unable to report the results for the neural ODE without state-derivative normalization $1/\tau$ for CCT since the optimization was unstable.}
\label{tab:results-CCT-CED}
\centering
\scriptsize
\begin{subfigure}{0.45\textwidth}
    \caption{CCT benchmark}
    \label{tab:CCT}
    \begin{tabular}{|ll|}
    \hline
    \textbf{Method}                                                                                                              & \textbf{RMSE} \\ \hline
    BLA~\citep{CCT:relan2017BLA-NLSS}& 0.75             \\
    Volterra model~\citep{CCT:birpoutsoukis2018volt}& 0.54             \\
    \begin{tabular}[c]{@{}l@{}}State-space with \\ GP-inspired prior~\citep{CCT:svensson2017SS-GPprior}\end{tabular}& 0.45             \\
    SCI~\citep{forgione2021continuous}& 0.40             \\
    IO stable CT ANN \citep{weigand2021input} & 0.39  \\
    NL-SS + NLSS2~\citep{CCT:relan2017BLA-NLSS}& 0.34             \\
    TSEM~\citep{forgione2021continuous}& 0.33             \\ 
    Tensor B-splines~\citep{CCT:karagoz2020bsplines} & 0.30\\ %
    \begin{tabular}[c]{@{}l@{}}Vanilla neural ODE~\citep{chen2018neuralode}\end{tabular} & NaN \\
    \begin{tabular}[c]{@{}l@{}}neural ODE with \\ normalization ($\Delta t/\tau = 0.03)$\end{tabular} & \tworow{0.18}{(0.33)} \\
    \begin{tabular}[c]{@{}l@{}}Grey-Box with physical \\ 
    overflow model~\citep{CCT:rogers2017greybox}\end{tabular}& 0.18             \\\hline
    DT subspace encoder & \begin{tabular}[c]{@{}l@{}}0.37\\ (0.97)\end{tabular}\\
    \textbf{\begin{tabular}[c]{@{}l@{}}\textbf{CT subspace encoder}\\ \textbf{($\Delta t/\tau = 0.032$)}\end{tabular}} & \textbf{\begin{tabular}[c]{@{}l@{}}\textbf{0.22}\\ \textbf{(0.30)}\end{tabular}}\\
    \hline
    \end{tabular}
\end{subfigure}%
\begin{subfigure}{0.55\textwidth}
    \caption{CED benchmark}
    \label{tab:CED}
    \begin{tabular}{|lll|}
    \hline
    \textbf{Method}                                                                & \multicolumn{2}{l|}{\textbf{RMSE {[}ticks/s{]}}} \\ 
                                                                                   & \textbf{Set 1}       & \textbf{Set 2}      \\ \hline
    RBFNN - FSDE~\citep{CED:ayala2014cascaded}                                                                   & 0.130                  & 0.185                 \\
    \tworow{GP with rational}{quadratic kernel~\citep{CED:zhou2021sparse}}  & 0.150                  & 0.167                 \\
    \begin{tabular}[c]{@{}l@{}}GP with squared \\ exponential kernel~\citep{CED:zhou2021sparse}\end{tabular} & 0.153                  & 0.132                 \\
    Sparse Bayesian MLP~\citep{CED:zhou2021sparse} & \begin{tabular}[c]{@{}l@{}}0.149 \\ (0.187)\end{tabular}                 & \begin{tabular}[c]{@{}l@{}}0.120 \\ (0.134)\end{tabular} \\ 
    Cascaded Splines~\citep{CED:scarpiniti2015casspines}& 0.216                  & 0.110                 \\
    Sparse Bayesian LSTM~\citep{CED:zhou2021sparse} &  \begin{tabular}[c]{@{}l@{}}0.121 \\ (0.155)\end{tabular}                & \begin{tabular}[c]{@{}l@{}}0.097 \\ (0.126)\end{tabular}               \\
    \begin{tabular}[c]{@{}l@{}}Extended Fuzzy Logic\\ \citep{CED:sabahi2015fuzzy}\end{tabular}& 0.150                  & 0.092                 \\
    \begin{tabular}[c]{@{}l@{}}Vanilla neural ODE~\citep{chen2018neuralode}\end{tabular} & \tworow{0.141}{(0.362)} & \tworow{0.098}{(0.370)} \\
    \tworow{neural ODE with}{normalization ($\Delta t/\tau = 0.12$)} & \tworow{0.131}{(0.198)} & \tworow{0.086}{(0.158)} \\
    \hline
    DT subspace encoder & \begin{tabular}[c]{@{}l@{}} 0.169 \\ (0.187) \end{tabular}     & \begin{tabular}[c]{@{}l@{}} 0.117 \\ (0.1557) \end{tabular}\\
    \begin{tabular}[c]{@{}l@{}}\textbf{CT subspace encoder}  \\ ($\Delta t/\tau = 0.3$)\end{tabular}   & \begin{tabular}[c]{@{}l@{}}\textbf{0.115} \\ \textbf{(0.143)}\end{tabular}     & \begin{tabular}[c]{@{}l@{}}\textbf{\textbf{0.074}} \\ \textbf{(0.100)}\end{tabular}\\
    \hline
    \end{tabular}
\end{subfigure}
\end{table}

We compared our obtained model to the literature in Table \ref{tab:results-CCT-CED} for CCT and CED. We report both the mean and the minimum of an ensemble of models estimated only differing in parameter initialization. This ensemble consists of 17 SUBNET models for both CCT and CED and 24 and 8 neuralODE models for CCT and CED respectively. The table also includes the discrete-time (DT) subspace encoder which has the same network structure and loss function as the CT subspace encoder but where the ODE solver is replaced by $f_\theta$. The table shows that the obtained models with the CT subspace encoder method provide state-of-the-art results. The obtained models are the best-known with a black-box modeling approach on both benchmarks. Furthermore, we use unrestricted state-space and fully connected neural networks as model elements. Remarkably, the resulting performance is close to the performance of a grey-box model. Furthermore, Figure \ref{fig:time-plots} illustrates that the resulting models have been able to model the nonlinear behavior present in both benchmarks. %

\begin{figure}[t!]
\centering
\begin{subfigure}{0.9\textwidth}
  \centering
  \includegraphics[width=1.0\linewidth]{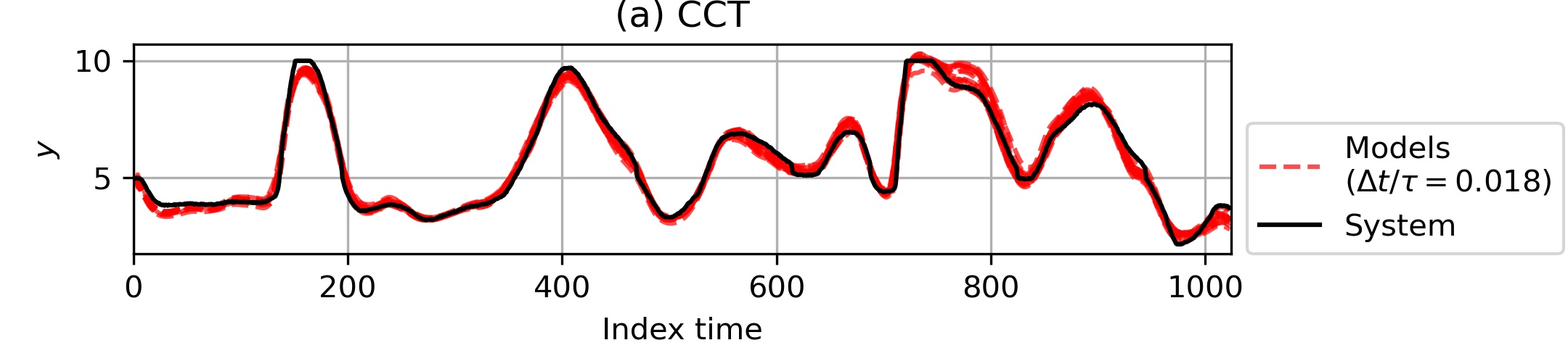}
\end{subfigure}
\begin{subfigure}{1.0\textwidth}
  \centering
  \includegraphics[width=1.0\linewidth]{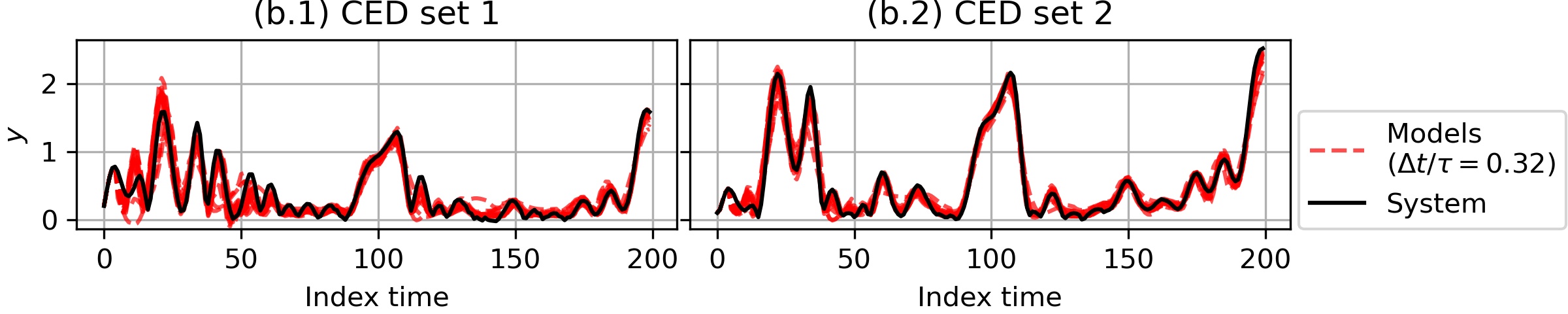}
\end{subfigure}
\caption{Time-domain simulation for both the (a) CCT and (b) CED benchmarks of the obtained models by the CT SUBNET method along the test set. Since the CED benchmark contains two separate test sequences, hence, they are shown in two separate figures.}
\label{fig:time-plots}
\end{figure}

Table \ref{tab:results-CCT-CED} also contains the results of the modified neural ODE with normalization. One observation is that the best and mean performance difference is significantly larger than for SUBNET. We think that this is due to the availability of only a single sequence in the training set for the CCT benchmark (and two sequences for CED) which results in extensive overfitting. In comparison, the subspace encoder method is less prone to overfitting since it uses many subsections of the available sequence(s). Moreover, the subspace encoder method only requires about 20 minutes to train a model to the lowest validation loss, whereas, neural ODE requires about 2 hours for CED and 5 hours for CCT.

\begin{figure}
\centering
\begin{minipage}{.475\textwidth}
  \centering
  \includegraphics[width=.99\linewidth]{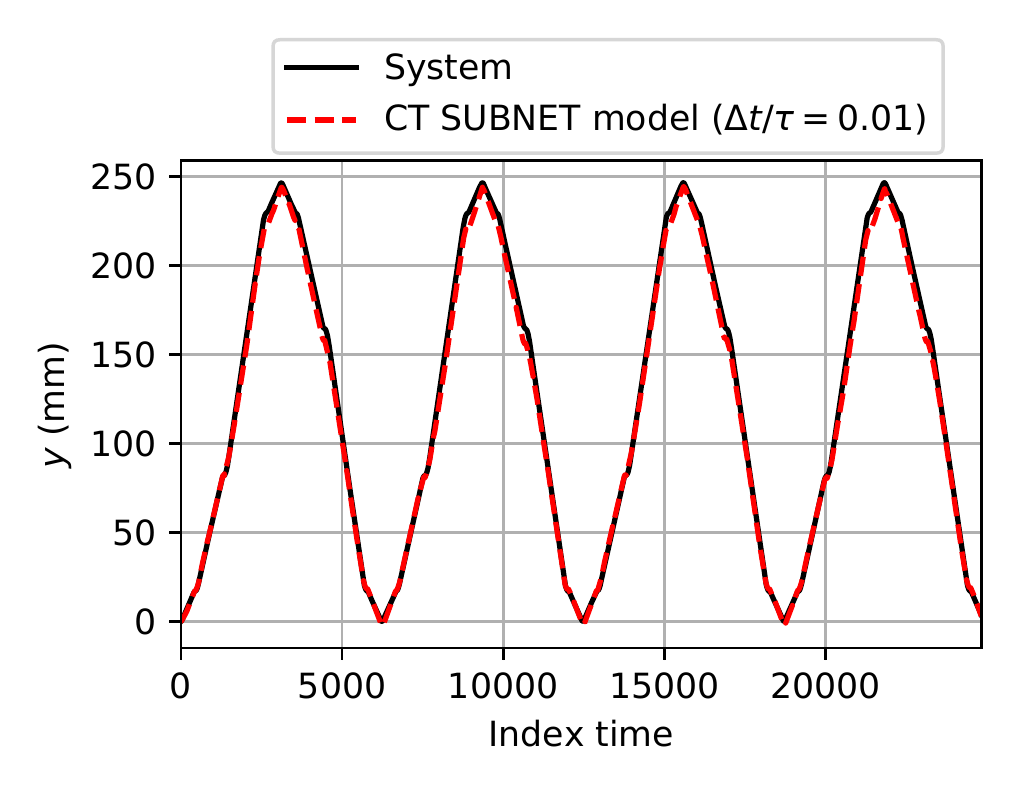}
  \captionof{figure}{Time domain simulation results on the EMPS benchmark test set using the model obtained from the CT SUBNET method.}
  \label{fig:EMPS-CT-SUBNET-sim}
\end{minipage}%
\begin{minipage}{.025\textwidth}
 \tiny {\color{white} .} %
\end{minipage}%
\begin{minipage}{.50\textwidth} \scriptsize
    
    \captionof{table}{The resulting test RMSE simulation on the test set for the EMPS benchmark compared with results in the literature.}
    \begin{tabular}{|lll|}
    \hline
    \textbf{Method}           & \textbf{\tworow{RMSE}{{[}mm{]}}} & \textbf{\tworow{Fully}{Black-box}}\\ \hline
    \tworow{BLA}{\citep{janot2019EMPS}}& 61.67     &  \cmark                                                              \\
    \tworow{dynoNET}{\citep{forgione2021dynonet}}& 2.64      &  \xmark                                                               \\
    \tworow{IO stable CT ANN}{\citep{weigand2021input}} & 3.68      &  \cmark                                                        \\
    \tworow{TSEM}{\citep{forgione2021continuous}}\footnote{\label{note1}RMS values reported in \cite{weigand2021input}}             & 4.73      &  \xmark                                                              \\
    
    \tworow{SCI}{\citep{forgione2021continuous}}\textsuperscript{\textit{a}}             & 6.65      &  \xmark                                                              \\
    \tworow{Vanilla neural ODE}{\citep{chen2018neuralode}} & NaN & \cmark \\ \hline
    \textbf{\tworow{CT subspace encoder}{($\Delta t/\tau = 0.01$)}} & 4.61      &  \cmark                                          \\ \hline
    \end{tabular}
    \label{tab:emps-results}
\end{minipage}
\end{figure}

We compare the CT SUBNET method applied on the EMPS benchmark to the existing methods in Table \ref{tab:emps-results} and show the simulated response on the test set in Figure \ref{fig:EMPS-CT-SUBNET-sim}. The obtained results show a remarkable accuracy over the entire $24841$ samples showing that the CT SUBNET method is able to make accurate long-term predictions while using relatively short sub-sequences of only $T=200$ in length during training. In the table, dynoNET is significantly better than the proposed method, however, this method utilizes grey-box knowledge (i.e. physics-based) in the network structure but this is system specific and not easily generalizable. Lastly, IO stable CT ANN performs slighly better than the proposed method since it enforces stability which can be a problem during estimation since there is a position integrator present in system. Furthermore, since the validation and test set are disjoint we also show that overfitting does not play a role in the reported results. \vspace{-2mm}
 
When we introduced the state-derivative normalization factor $1/\tau$, we argued that it would normalize $f_\theta$ and that it would increase optimization stability and, hence, the quality of the obtained models. Here, we provide some empirical insight for these two statements by providing a parameter sweep over $\Delta t/\tau$. To eliminate variations due to different initial parameters we trained an ensemble of models which creates box plots with mean state amplitude defined by $\text{RMS}(x) \triangleq \sqrt{\frac{1}{N n_x} \sum_k \| x(k \Delta t) \|_2^2},$ mean state-derivative amplitude $\text{RMS}(f)$, and the RMSE simulation. These box-plots as shown in Figure \ref{fig:box-plots} indeed illustrate that there exists an $1/\tau$ such that both $\text{RMS}(f) \approx \text{RMS}(x) \approx 1$ and that the best performing models are close to that value of $1/\tau$. Moreover, to illustrate that improper normalization (i.e. $\tau = 1$) can diminish the performance for both CCT and CED, observe that the RMSE simulation on the test set(s) is $2.0$ and $0.3$ {[}ticks/s{]} for $\Delta t/\tau = \Delta t = 4\ \mathrm{s}$ and $\Delta t/\tau = \Delta t = 0.02\ \mathrm{s}$ respectively.
 
\begin{figure}[t!]
\centering
\begin{subfigure}{0.9\textwidth}
  \centering
  \includegraphics[width=1.0\linewidth]{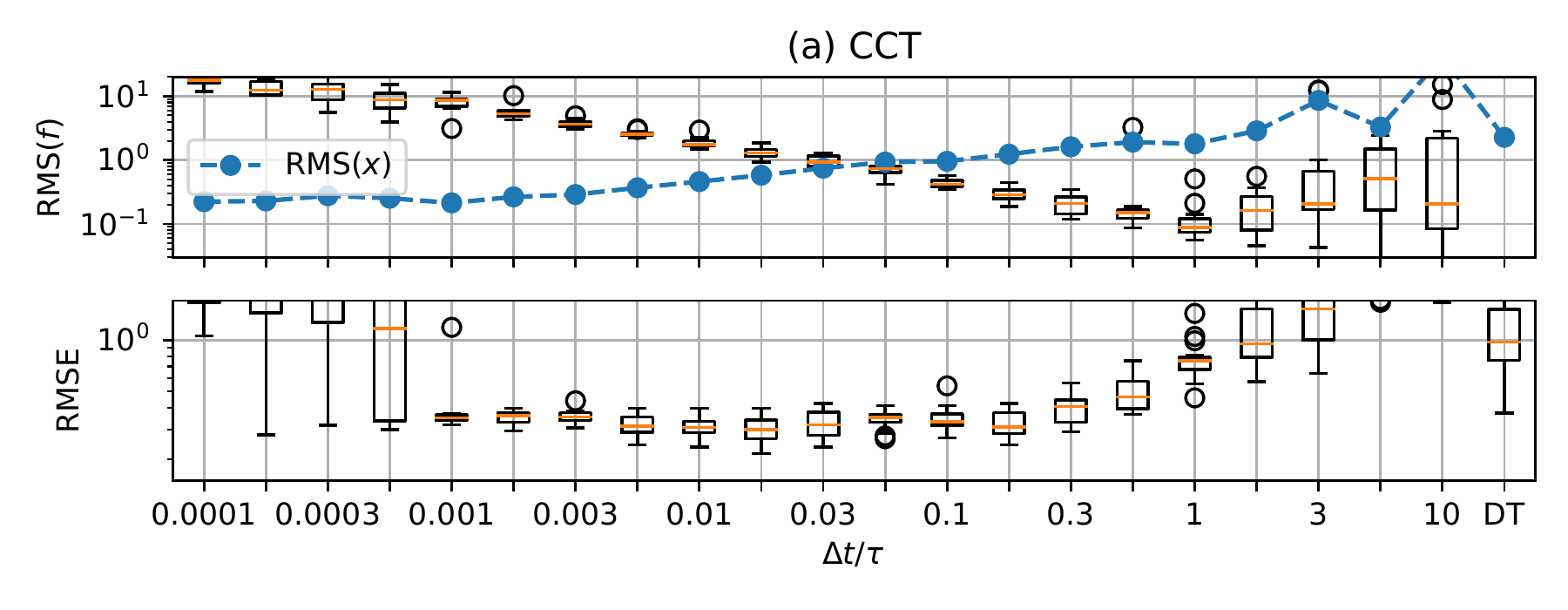}
\end{subfigure}
\begin{subfigure}{0.9\textwidth}
  \centering
  \includegraphics[width=1.0\linewidth]{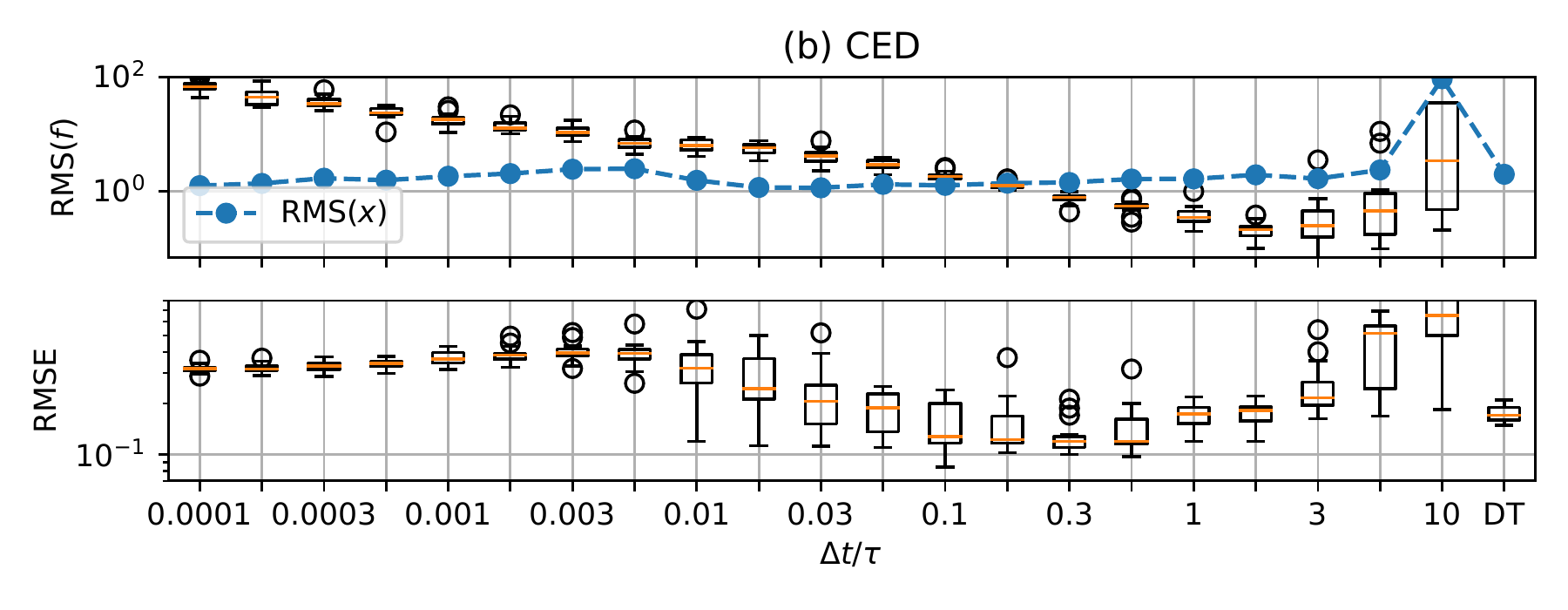}
\end{subfigure}
\caption{The influence of the state-derivative normalization hyperparameter $\Delta t/\tau$ as in  $\dot{x} = \frac{1}{\tau} f_\theta(x,u)$ on different model properties for both the (a) CCT and (b) CED benchmarks. The shown model properties are the mean state amplitude $\text{RMS}(x)$, mean state-derivative amplitude $\text{RMS}(f)$ and the RMSE simulation on the test set(s). These two figures show that there exists a range of $\Delta t/\tau$ where $\text{RMS}(f) \approx \text{RMS}(x) \approx 1$ which 
numerically validates Theorem \ref{theorem:tau-derive}. Furthermore, $\Delta t/\tau$ with this property also has a significantly lowered RMSE simulation as was argued in its introduction.}
\label{fig:box-plots}
\end{figure}

\vspace{-3mm}
                  \section{Conclusion} 
In this paper, we have introduced the CT subspace encoder approach to identify nonlinear dynamical systems in the presence of latent states, external inputs, and measurement noise. 
We have shown that the proposed method can obtain highly accurate CT models only consisting of fully connected neural networks. 
The approach has improved computational cost and stability by considering multiple subsections where the initial state is estimated with an encoder function and by a state-derivative normalization term to improve optimization stability. We provided multiple theoretical proofs which provide additional insight and motivation for the method. These proofs are, increased cost function smoothness, necessary conditions for the existence of the encoder function and that for proper normalization in CT modeling one requires the state-derivative normalization term. Furthermore, we obtain state-of-the-art results on all three considered benchmarks.

                  \section{Reproducibility Statement}
\begin{itemize}
    \item \textbf{Datasets: } \textit{(i)} The CCT dataset is described in \cite{schoukens2016CCT,schoukens2016cascadedbase} and is available for download at \url{https://data.4tu.nl/articles/dataset/Cascaded_Tanks_Benchmark_Combining_Soft_and_Hard_Nonlinearities/12960104}, \textit{(ii)} the CED dataset is described in \cite{wigren2017CED} and is available for download at \url{http://www.it.uu.se/research/publications/reports/2017-024/}, \textit{(iii)} the EMPS datset is described in \citep{janot2019EMPS} and is available for download at \url{https://www.nonlinearbenchmark.org/benchmarks/emps}.
    \item \textbf{Code: } Both the implementation and experiments of CT SUBNET and neural ODE are available at \anon{\textit{``Github link removed for double-blind reviews, see zip file from supplementary materials until de-anonymization''}}{\url{https://github.com/GerbenBeintema/CT-subnet}}.
    \item \textbf{Hardware: } It takes about 15 minutes to estimate a single CT SUBNET model and 2 hours for a single neural ODE model on a consumer laptop. A notable exception is CT SUBNET for EMPS which took about 10 hours due to the increased size and difficulty of the dataset. 
\end{itemize}

\bibliography{iclr2023_conference}
\bibliographystyle{iclr2023_conference}

\section{Appendix}

         \subsection{Proof of theorem \ref{theorem:Lipschitz}} \label{app:Lipschitz}

Recall that the subspace encoder loss function as in Eq. \ref{eq:encoder-method} can be expressed in the following form
\begin{equation}
\label{eq:encoder-method-lipschitz}
\begin{aligned}
    V^\text{enc}(\theta) = \quad & \frac{1}{N-T-\max(n_a,n_b)+1} \sum_{n=\max(n_a,n_b)}^{N-T} \frac{1}{T}\sum_{k=0}^{T-1} \| y_{n+k} - \hat{y}_{n+k|n} \|_2^2, \\
    \textrm{with} \quad & \hat{y}_{n+k|n} = h_\theta(x_{n+k|n}) = h_\theta(x((n+k)\Delta t| n \Delta t)),\\
      & \dot{x}(t|n \Delta t) = \frac{1}{\tau} f_\theta(x(t|n \Delta t), u(t))\\
      & x(n \Delta t|n \Delta t)= \psi_\theta(u_{n-1},...,u_{n-n_b}, y_{n-1}, ..., y_{n-n_a}).
\end{aligned}
\end{equation}
Our aim is to derive the scaling in $T$ of the Lipschitz constant $L_\text{enc}$ as defined in 
\begin{equation}
| V^\text{enc}(\theta_1) - V^\text{enc}(\theta_2)|^2 \leq L_\text{enc}^2 \| \theta_1 - \theta_2 \|_2^2, \ \ \ \ \forall \theta_1, \theta_2 \in \Theta \subset \mathds{R}^{n_\theta}.
\end{equation}
We aim to express $L_\text{enc}$ in terms of the following Lipschitz constants
\begin{subequations}
\begin{align}
    \| h_{\theta_1}(x_1) - h_{\theta_2}(x_2) \|_2^2 &\leq L_h^2 (\| x_1 - x_2\|_2^2 + \|\theta_1 - \theta_2\|_2^2). \label{eq:lip-h} \\
    \| f_{\theta_1}(x_1,u)- f_{\theta_2}(x_2,u) \|_2^2 &\leq L_f^2 (\| x_1 - x_2\|_2^2 + \|\theta_1 - \theta_2\|_2^2), \label{eq:lip-f} \\
    \| \psi_{\theta_1}(u_\text{past}, y_\text{past}) - \psi_{\theta_2}(u_\text{past}, y_\text{past}) \|_2^2 &\leq L_\psi^2 \|\theta_1 - \theta_2\|_2^2. \label{eq:lip-psi}
\end{align}
\end{subequations}
for all $x_1, x_2 \in X \subset \mathds{R}^{n_x}$ and $\forall \theta_1, \theta_2 \in \Theta \subset \mathds{R}^{n_\theta}$. A sufficient condition for these Lipschitz constants to be finite is that the derivatives are finite on a compact set of inputs which is often the case in feed-forward neural networks.

Furthermore, to derive the scaling of $L_\text{enc}$ we use known properties of the Lipschitz constants;

\begin{itemize}
 \item The sum property: $c(x) = a(x) + b(x)$ has a Lipschitz constant of $L_c = L_a + L_b$. 
 \item The multiplication property: $c(x) = a(x) b(x)$ has a Lipschitz $L_c = M_a L_b + M_b L_a$ where $M_a$ is the maximal value of $a$ on a closed set of inputs $x \in X$ and $M_b$ being similarly defined. 
\end{itemize}

Since the encoder loss function as in Eq. \ref{eq:encoder-method} can be written as a sum: 
\begin{subequations}
\begin{gather}
    V^\text{enc}(\theta) = \frac{1}{N-T-\max(n_a,n_b)+1} \sum_{n=\max(n_a,n_b)}^{N-T} V^\text{sec}(n,\theta_1)\\
    V^\text{sec}(n,\theta_1) = \frac{1}{T}\sum_{k=0}^{T-1} \| y_{n+k} - \hat{y}_{n+k|n} \|_2^2
\end{gather}
where
\begin{gather}
    | V^\text{sec}(n,\theta_1) - V^\text{sec}(n,\theta_2)|^2 \leq L_\text{sec}^2 \| \theta_1 - \theta_2 \|^2_2,
\end{gather}
\end{subequations}
Then by the sum property this implies that $L_\text{enc} = L_\text{sec}$ since 
\begin{equation}
    L_\text{enc} = \frac{1}{N-T-\max(n_a,n_b)+1} \sum_{n=\max(n_a,n_b)}^{N-T} L_\text{sec}. \label{eq:Lsec-Lenc}
\end{equation}
Hence, it is sufficient to consider only a single subsection. Take $n=0$ and drop the bar notation for simplicity. By the sum and multiplication properties, we derive get
\begin{equation}
    | V^\text{sec}(\theta_1) - V^\text{sec}(\theta_2)| \leq 2/T \sum_{k=0}^{T-1} (M_y + M_k) \| \hat{y}_{1,k} - \hat{y}_{2,k}\|_2, \label{eq:lip-Vsec}
\end{equation}
where $M_y$ is the bound on $\|y(t)\|_2$ assuming a stable system and $M_k$, the bound on $\|\hat{y}_k\|_2$. The $M_k$ bound scales the same as $\| \hat{y}_{1,k} - \hat{y}_{2,k}\|_2$ as shown in \cite{ribeiro2020smoothness}. The  $\| \hat{y}_{1,k} - \hat{y}_{2,k}\|_2$ expression can be expanded by using Eq. \ref{eq:lip-h} as
\begin{equation}
    \| \hat{y}_{1,k} - \hat{y}_{2,k} \|_2^2 \leq  L_h^2  (\| x_1(k \Delta t) - x_2(k \Delta t)\|_2^2 + \|\theta_1 - \theta_2\|_2^2). \label{eq:lip-y}
\end{equation}

Next, we aim to derive an expression for the Lipschitz constant $L_x(t)$ given in terms of
\begin{equation}
    \| x_1(t) - x_2(t)\|_2^2 \leq L_x(t)^2 \|\theta_1 - \theta_2\|_2^2. \label{eq:lip-x}
\end{equation}
whereby Eq. \eqref{eq:lip-psi}
\begin{equation}
L_x(0) = L_\psi. \label{eq:lip-Lx0}
\end{equation}

By considering a small increment in time of length $h$, we can express $L_x(t+h)$ in terms of $L_x(t)$. Using the fact that $h$ is small we can use an Eurler step and discard higher order terms of $h$ as;
\begin{gather*}
    \| x_1(t+h) - x_2(t+h) \|_2^2 = \| (x_1(t) - x_2(t)) + h/\tau (f_{\theta_1}(x_1(t),u(t)) - f_{\theta_2}(x_2(t),u(t)))\|_2^2\\
    \leq \| x_1(t) - x_2(t) \|_2^2 + 2 h/\tau \|x_1(t) - x_2(t)\|_2 \| f_{\theta_1}(x_1(t),u(t)) - f_{\theta_2}(x_2(t),u(t)) \|_2
\end{gather*}
by the triangle inequality. Next, we can replace all the $f$ terms by Eq. \eqref{eq:lip-f} and $x$ terms by Eq. \eqref{eq:lip-x} to derive an expression for $L_x(t+h)$ as
\begin{gather*}
    \leq \| x_1(t) - x_2(t) \|_2^2 + 2 h/\tau \|x_1(t) - x_2(t)\|_2 L_f \sqrt{\| x_1(t) - x_2(t) \|_2^2 + \|\theta_1 - \theta_2 \|_2^2}\\
    \leq \left (L_x(t)^2  + 2 h/\tau L_x(t) L_f \sqrt{L_x(t)^2+1} \right )\| \theta_1(t) - \theta_2(t) \|_2^2\\
    L_x(t+h) = \sqrt{L_x(t)^2  + 2 h L_x(t) \sqrt{L_x(t)^2+1} L_f/\tau}.
\end{gather*}
This expression allows us to derive that the derivative of $L_x(t)$ is given by 
\begin{align}
    \dot{L}_x (t) &= \lim_{h\rightarrow 0} \frac{L_x(t+h)-L_x(t)}{h}, \label{eq:Lxlimit}\\
     &= \sqrt{1+L_x(t)^2} L_f/\tau.
\end{align}
which suggests that $\dot{L}_x (t)$ is continuous in $t$ and has the closed form solution as
\begin{align}
    L_x(t) &= L_x(0) + \int_0^t \dot{L}_x (t') dt'\\
     &= \text{sinh} ( t L_f/\tau + \text{arcsinh} (L_\psi)) \label{eq:lip-Lxsol}
\end{align}

Now by substituting Eq. \eqref{eq:lip-Lxsol} into, \eqref{eq:lip-x}, \eqref{eq:lip-y}, \eqref{eq:lip-Vsec} and using \eqref{eq:Lsec-Lenc} we arrive at the following expression for $L_\text{enc}$ as
\begin{subequations}
\begin{align}
    L_\text{enc} &=  2/T \sum_k (M_y + M_k) L_h  (\text{sinh}  (k \Delta t L_f/\tau  + \text{arcsinh} (L_\psi))+1)
\end{align}    
\end{subequations}
which scales in the limit of large $T$ as
\begin{subequations}
    \begin{align}
        L_\text{enc} &= \mathcal{O}(\exp(2 T \Delta t L_f/\tau )),
    \end{align}    
\end{subequations}
since $L_f > 0$ and $\Delta t>0$. Note that the $2$ in the exponent is from the multiplication with $M_k$ which also scales as the $y$ term as previously mentioned.

Furthermore, this bound cannot be lowered since the linear system $\dot{x}(t) = x(t) L_f/\tau$ already results in the scaling of $L_\text{enc} \sim \exp(2 T \Delta t L_f/\tau )$.

\subsection{reconstructability of the initial state from past input and outputs}\label{app:reconstruct}

To derive the conditions on the existence of the encoder function suppose that we have a system given by 
\begin{subequations}
\begin{gather}
    \dot{x}(t) = f(x(t), u(t)), \\
    y_n = h(x_n) + w_n,
\end{gather}    
\end{subequations}
where $x_n = x(n \Delta t )$. For this system, if a state is given $x(t_0)$ along the input trajectory $u(t)$ one would in principle be able to compute $x(t)$ for all $t > t_0$. However, since we aim to construct the state given past outputs we need $x(t)$ for $t<t_0$ which requires backward in time integration. This backward integration on $f$ is guaranteed to be unique if $f$ is Lipschitz continuous in $x$ for all $u$ as by the Picard--Lindel\"{o}f theorem~\citep{murray2013existenceODE}. Hence, since $f$ is assumed to be Lipschitz continuous we can construct an operator $f_d$ which can integrate backwards or forwards as
\begin{equation}
x_{n+1} = f_d(x_n, u_n) \rightarrow x_{n-1} = f_d^{-1}(x_n,u_{n-1})
\end{equation}
where $u$ is subject to ZOH.

This operator allows us to construct past outputs as 
\begin{equation}
\label{eq:non-lin-reconstruct}
\begin{aligned}
y_{n-1} &= (h \circ f_d^{-1}) (x_n, u_{n-1}) + w_{n-1}\\
y_{n-2} &= (h \circ f_d^{-2}) (x_n, u_{n-1},u_{n-2}) + w_{n-2}\\
  \vdots & \\
y_{n-z} &= (h \circ f_d^{-z}) (x_n, u_{n-1},u_{n-2},...,u_{n-z}) + w_{n-z}\\
Y_{n}^{-z} &= (H \circ F_d^{-z})( x_n, U_n^{-z}) + W_n^{-z}
\end{aligned}
\end{equation}
where $$f_d^{-p}(x_n, u_{n-1},...,u_{n-p}) = f_d^{-p+1}(f_d^{-1}(x_n,u_{n-1}), u_{n-2},..., u_{n-p})$$ is the application of $f_d^{-1}$, $p$ times to obtain $x_{n-p}$. Furthermore, $Y_{n}^{-z} = [y_{n-1}^\top, y_{n-2}^\top, ..., y_{n-z}^\top]^\top$ and, $(H \circ F_d^{-z})$ with $W_n^{-z}$ are similarly defined. To construct the initial state $x_n$, we need to invert Eq. \eqref{eq:non-lin-reconstruct}. This inverse is also known as a reconstructability  map~\citep{katayama2005subspace}. For the inverse to exist, several necessary requirements can be given. One such necessary requirement is that a small perturbation to a solution $x_n$ should change $(H \circ F_d^{-z})$, otherwise these solutions are indistinguishable from the output. This is formalized stating by that the matrix
$
\frac{\partial (H \circ F_d^{-z})(x_n, U_n^{-z})}{\partial x_n}
$
has a null space of rank zero which is also known as the local observability condition. This is the same as the condition that the column rank of this matrix $\geq n_x$. A necessary requirement for this column rank condition is that the number of columns is equal to or greater than the number of rows i.e. $z n_y \geq n_x$. 

Hence, under the right conditions, it might be possible to solve Eq. \eqref{eq:non-lin-reconstruct} for a singular $x_n$ since this equation is a nonlinear fixed point problem if $W_n^{-z}$ is known. For the case that $W_n^{-z}$ is unknown one can estimate the state $\hat{x}_n \approx x_n$ by solving the nonlinear regression problem; 
\begin{subequations} \label{eq:reconstruc-with-noise}
\begin{align} 
    \hat{x}_n &= \text{arg} \min_{\hat{x}_n} \| Y_n^{-z} - (H \circ F_d^{-z})( \hat{x}_n, U_n^{-z})  \|_2,\\
    &= \text{arg} \min_{\hat{x}_n} \| L(\hat{x}_n, Y_n^{-z}, U_n^{-z})  \|_2.
\end{align}     
\end{subequations}
Hence, both $f$ uniformly Lipschitz continuous and $(\nabla_x L)^T \nabla_x L$ being full rank in $\hat{x}_n$ are necessary conditions for the existence of a unique reconstructability map. 

Computing the reconstructability map for our model thus requires solving an optimization problem that becomes computationally infeasible during training. Hence, the encoder function aims to approximate the solution to Problem \eqref{eq:reconstruc-with-noise}.

\end{document}